\documentclass{article}

\usepackage{arxiv}

\usepackage[utf8]{inputenc} 
\usepackage[T1]{fontenc}    
\usepackage{hyperref}       
\usepackage{url}            
\usepackage{booktabs}       
\usepackage{amsfonts}       
\usepackage{nicefrac}       
\usepackage{microtype}      
\usepackage{amsmath, amsfonts, amssymb}
\usepackage{amsthm}
\usepackage{cleveref}       
\usepackage{graphicx}
\usepackage[numbers]{natbib}
\usepackage{doi}
\usepackage{multirow}
\usepackage{placeins}

\newtheorem{proposition}{Proposition}

\title{Neural Prior Estimation: Learning Class Priors from Latent Representations}
\date{}

\author{ 
\href{https://orcid.org/0009-0007-9525-0617}{\includegraphics[scale=0.06]{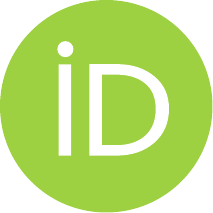}\hspace{1mm}Masoud Yavari}\\
Independent Researcher\\
Isfahan, Iran\\
\texttt{masoud.yavari@gmail.com} \\
\And
\href{https://orcid.org/0000-0001-7891-293X}{\includegraphics[scale=0.06]{orcid.pdf}\hspace{1mm}Payman Moallem} \\
Department of Electrical Engineering\\
Faculty of Engineering\\
University of Isfahan, Iran\\
\texttt{p\_moallem@eng.ui.ac.ir}
}

\renewcommand{\shorttitle}{Neural Prior Estimation}

\hypersetup{
hidelinks,
pdftitle={Neural Prior Estimation: Learning Class Priors from Latent Representations},
pdfauthor={Masoud Yavari, Payman Moallem},
pdfkeywords={long-tailed recognition, class imbalance, image classification, semantic segmentation, logit adjustment, prior estimation},
}

\begin{document}
\maketitle

\begin{abstract}
Logit adjustment corrects class imbalance using the empirical class prior. We study whether a comparable class-frequency signal can instead be learned from the network representation, without explicitly supplying class counts to the correction rule. We introduce the Neural Prior Estimator (NPE), which attaches one or more lightweight Prior Estimation Modules (PEMs) to the latent representation. Each PEM is trained with a one-way logistic objective on the ground-truth coordinate. The resulting frequency-dependent outputs are combined into an NPE estimate and used as a learned logit correction, forming NPE-LA. In a simplified scalar model, the optimum of the PEM objective is monotone in the class count and grows asymptotically as $\log N_c$, up to a slower $\log\log N_c$ term. Experiments on long-tailed CIFAR-10 and CIFAR-100 show that NPE-LA is competitive with standard logit adjustment and improves minority-class performance over CE and cRT in the reported settings. Experiments on STARE and ADE20K further show that the same idea can be used as a lightweight recalibration mechanism for dense prediction.
\end{abstract}

\keywords{Long-tailed recognition \and Class imbalance \and Image classification \and Semantic segmentation \and Logit adjustment \and Prior estimation}

\section{Introduction}
\label{sec:introduction}

Imbalanced datasets, where a small number of classes account for most of the training samples while many others are represented by only a few examples, remain a common difficulty in visual recognition~\cite{zhang2025systematic,zhang2023deep}. A classifier trained on such a distribution is repeatedly updated by examples from the head classes and can consequently develop decision boundaries that favor them, while the tail classes receive weaker and less stable supervision. Long-tailed recognition has therefore been studied from several directions, including data rebalancing, modified objectives, representation learning, classifier redesign, and multi-expert architectures.

One group of methods attempts to increase or redistribute the information available to minority classes. Transfer-based approaches use knowledge learned from frequent categories to improve the representations of rare ones~\cite{yin2018feature}, while feature-level augmentation methods synthesize or transfer class-dependent variations to enrich the tail~\cite{kim2020m2m,wang2021rsg,li2021metasaug}. Other approaches act directly on the effective training distribution. Classical over- and under-sampling change the frequency with which classes are observed, and more adaptive schemes use training dynamics or class performance to determine the sampling policy~\cite{chawla2002smote,wang2019dynamic,zang2021fasa}. At the objective level, class-balanced weighting, focal loss, Balanced Softmax, gradient corrections, and margin-based losses such as LDAM modify the contribution or decision margin of individual classes~\cite{cui2019class,lin2017focal,ren2020balanced,tan2020equalization,hsieh2021droploss,wang2021adaptive,cao2019learning}. These methods address the same imbalance from different points in the training pipeline, but they generally require the imbalance to be encoded explicitly through sampling probabilities, class counts, margins, or related statistics.

A related line of work focuses on the classifier and the learned representation. Decoupled methods such as cRT separate representation learning from classifier re-training, allowing the feature extractor to be learned on the natural distribution and the classifier to be re-estimated under a more balanced one~\cite{kang2019decoupling}. Other methods modify the classifier geometry or decision rule through normalization, hierarchical structures, or causal formulations~\cite{ye2020identifying,wu2020solving,tang2020long}. Multi-branch and multi-expert approaches instead learn complementary prediction functions. BBN, for example, uses two branches together with cumulative learning to balance representation and classifier behavior~\cite{zhou2020bbn}, while other expert-based methods specialize different predictors to different regions of the class distribution or encourage diversity among them~\cite{li2020overcoming,wang2020long,zhang2022self}. These approaches demonstrate that imbalance is reflected not only in the final logits but also in the representation and optimization dynamics that produce them.

Logit Adjustment (LA)~\cite{menon2020long} provides a particularly simple distribution-level correction. For a classifier trained on an imbalanced distribution, LA shifts the logits according to the logarithm of the empirical class prior. The adjustment follows directly from the prior term in Bayes' rule and can be applied without changing the backbone architecture. The same prior-dependent correction can also be incorporated into the training objective through the Logit Adjustment Loss. In both cases, however, the correction is constructed from class-frequency statistics supplied from outside the network.

This dependence on explicit counts is usually harmless on a fixed benchmark, where the complete training distribution is known in advance. It is nevertheless natural to ask whether the network itself contains a usable frequency signal. Feature representations are shaped by the number of observations of each class, by stochastic optimization, and by the geometry learned for individual categories. The empirical histogram and the structure encoded by the representation are therefore related, but they are not the same object. A learned estimate extracted from the representation could provide a different view of the frequency information that ultimately appears in the classifier decision rule. It could also be useful in settings where the observed distribution changes over time, although the present work does not evaluate such settings directly.

Previous calibration methods partly relax the reliance on a fixed prior by learning transformations of the classifier output. Posterior Re-calibration and DisAlign, for example, learn adaptive corrections that improve balanced prediction~\cite{tian2020posterior,zhang2021distribution}. Their objective, however, is to calibrate or align the prediction distribution rather than to expose an explicit prior-related signal learned from the latent representation. This distinction motivates the question studied here: can a class-frequency signal suitable for logit adjustment emerge directly from intermediate features, without providing class counts to the correction rule?

We introduce the Neural Prior Estimator (NPE) to investigate this question. NPE attaches one or more lightweight Prior Estimation Modules (PEMs) to the backbone representation. A PEM produces one output per class and is optimized with a one-way logistic objective evaluated only at the coordinate of the ground-truth class. Because that coordinate is updated whenever the corresponding class is observed, the accumulated optimization signal is frequency dependent. The resulting PEM output is interpreted as a learned surrogate for the class-frequency component of the logits. The outputs of the PEMs are combined into the NPE estimate, which can then be subtracted from the main classifier logits to form NPE-LA.

A linear PEM has the same input-output shape as an auxiliary classifier head, but the similarity is architectural rather than functional. It is not trained with softmax cross-entropy to provide an additional class prediction. Its objective is one-way and class-wise, and its output is used to estimate the frequency-related term employed by the subsequent logit correction. This separates NPE from dual-branch methods such as BBN, where the branches are themselves classifiers and are designed to learn complementary decision functions under a cumulative training strategy~\cite{zhou2020bbn}. A single PEM is sufficient to construct NPE; using multiple PEMs is an optional extension that changes the auxiliary gradient received by the shared backbone and is examined experimentally.

The connection between the one-way objective and class frequency is examined both analytically and empirically. Appendix~\ref{app:theoretical_proof} studies a simplified scalar model in which samples from the same class share a common PEM output. Under quadratic output regularization, the unique optimum increases monotonically with the class count and has $\log N_c$ as its leading asymptotic term, together with a slower $\log\log N_c$ correction. Since $\log p(c)=\log N_c-\log N$, this establishes a direct relation between the learned signal and the log-prior term, while also making clear that exact prior recovery is not assumed. In the trained network, the estimates follow the overall class-frequency ordering on CIFAR-100-LT, with residual deviations that are associated with feature geometry.

We evaluate NPE-LA on long-tailed CIFAR-10 and CIFAR-100 under different imbalance ratios and optimization settings, comparing it with CE, cRT, and conventional logit adjustment. We also examine the effect of the number of PEMs on the shared backbone. To test whether the same construction extends beyond image-level classification, NPE-LA is further applied to semantic segmentation on STARE and ADE20K using frozen pretrained models and both linear and decoder-shaped PEMs. These experiments are intended to test the prior-estimation mechanism, its effect on minority classes, and its compatibility with different output structures.

The remainder of the paper is organized as follows. Section~\ref{sec:methodology} develops NPE and its integration with logit adjustment; Sections~\ref{sec:experiments}--\ref{sec:conclusion} present the experiments, results, and conclusions.

\section{Methodology}
\label{sec:methodology}
This section details the methodology of the Neural Prior Estimator (NPE) framework. First, the general formulation of the long-tailed learning problem is established. Next, the core estimation mechanism --- the Prior Estimation Module (PEM) --- and its optimization objective are presented. The imbalance-aware prediction mechanism, NPE-LA, is described in detail thereafter. Theoretical justification is provided in Appendix~\ref{app:theoretical_proof}.

The equations below are written for standard classification with vector-valued logits. The same construction can be applied to spatial outputs; Section~\ref{sec:experiments} evaluates that case in semantic segmentation.

\subsection{Problem Setup}

Let $\mathcal{D}=\{(\mathbf{x}_i,y_i)\}_{i=1}^{N}$ denote the training set, where $\mathbf{x}_i\in\mathcal{X}$ is an input sample and $y_i\in\{1,\ldots,C\}$ is its class label. 

For each class $c$, let $N_c$ denote the number of samples with label $c$, such that $\sum_{c=1}^{C} N_c = N$. The degree of imbalance is quantified by the imbalance ratio
\begin{equation}
	\rho = \frac{\max_{c} N_c}{\min_{c} N_c}.
\end{equation}
A large value of $\rho$ reflects a strongly non-uniform empirical class distribution, which is characteristic of imbalanced datasets.

The baseline approach for this classification task involves a standard classifier consisting of a feature extractor $f_\theta : \mathcal{X} \rightarrow \mathbb{R}^d$ followed by a linear prediction layer parameterized by $(\mathbf{W}, \mathbf{b})$:
\begin{equation}
	\mathbf{h} = f_\theta(\mathbf{x}), \qquad \mathbf{z} = \mathbf{W}\mathbf{h} + \mathbf{b},
\end{equation}
where $\mathbf{h} \in \mathbb{R}^d$ is the representation and $\mathbf{z} \in \mathbb{R}^C$ is the logit vector, with $z_c$ denoting the score associated with class $c$. The model is typically trained using the Cross-Entropy (CE) loss:
\begin{equation}
	\mathcal{L}_{\mathrm{CE}} 
	= \mathbb{E}_{(\mathbf{x},y) \sim \mathcal{D}}
	\bigg[
	- \log \frac{\exp(z_{y}(\mathbf{x}))}{\sum_{c=1}^{C} \exp(z_c(\mathbf{x}))}
	\bigg].
\end{equation}

Bayes' theorem reveals that minimizing the CE loss corresponds to maximizing the log-likelihood of the posterior $p_\theta(y|\mathbf{x})$, which decomposes as
\begin{equation}
	\log p_\theta(y|\mathbf{x}) 
	= \log p_\theta(\mathbf{x}|y) + \log p(y) + \mathrm{const}.
\end{equation}
Since neural classifiers implement decision rules in logit space, the empirical class prior $p(y)$ enters the decision function through its logarithm. Under severe imbalance, differences in $\log p(y)$ induce substantial implicit additive biases on the logits~\cite{menon2020long}, causing the model to systematically overestimate head classes and underestimate tail classes. 

Classical logit adjustment uses a fixed offset computed from empirical class counts. NPE replaces this externally supplied offset with a signal learned from the representation. The aim is not to define a new probabilistic prior conditioned on the input; rather, $\boldsymbol{\eta}(\mathbf{x})$ is used as a learned surrogate for the class-frequency component that appears in the logit decision rule. Under the idealized analysis in Appendix~\ref{app:theoretical_proof}, this signal becomes class dependent and its dominant term follows $\log N_c$. Away from that regime, the estimate can also vary with the representation, which is examined empirically in Appendix~\ref{app:empirical_result}.

\subsection{Neural Prior Estimator (NPE)}
\label{sec:NPE}

NPE uses one or more Prior Estimation Modules (PEMs) attached to the backbone representation and trained jointly with the main classifier. A single PEM is sufficient for the estimator; multiple PEMs are treated as an optional training extension.

\paragraph{Prior Estimation Modules (PEMs).}
Each PEM implements a differentiable mapping from the backbone feature vector to a 
class-wise output:
\begin{equation}
	\mathbf{u}_k(\mathbf{x})
	= g_k\!\left(\mathbf{h}(\mathbf{x})\right),
	\qquad k = 1,\ldots,N_{\mathrm{PEM}},
\end{equation}
where $\mathbf{u}_k(\mathbf{x})\in\mathbb{R}^C$ has the same dimensionality as the main classifier logits $\mathbf{z}(\mathbf{x})$. In the classification experiments, $g_k$ is a linear map. For a $d$-dimensional feature vector and $C$ classes, one linear PEM contains $dC+C$ parameters, the same order as a standard linear classification layer. Its parameter count is independent of the number of training samples and grows linearly with the number of classes and with $N_{\mathrm{PEM}}$.

\paragraph{Training objective.}
The loss for each PEM is defined as a one-way logistic term evaluated only on the true-class coordinate. The total NPE objective is the sum of the expected losses across all $N_{\text{PEM}}$ modules:
\begin{equation}
	\mathcal{L}_{\text{NPE}}
	=
	\sum_{k=1}^{N_{\text{PEM}}}
	\mathbb{E}_{(\mathbf{x}, y) \sim \mathcal{D}}
	[
	-\log \sigma\!\left( (-1)^{t}\, u_k(\mathbf{x})_{y} \right)
	],
\end{equation}
where $\sigma(u) = (1+e^{-u})^{-1}$ and $t \in \{0,1\}$ determines the sign convention of the update.

For the prediction task, the overall objective combines the CE loss of the main classifier with the NPE loss:
\begin{equation}
	\mathcal{L}
	=
	\mathcal{L}_{\text{CE}}
	+
	\mathcal{L}_{\text{NPE}}.
\end{equation}
This formulation allows the classifier and all PEMs to be optimized jointly, with the feature extractor receiving gradient signals from both objectives throughout training.
\paragraph{Emergent frequency-dependent structure.}

The one-way logistic loss acts only on the true-class coordinate and always pushes that coordinate in the same direction. Classes that occur more often therefore contribute more updates to their corresponding PEM coordinate. Regularization and saturation prevent unbounded growth, producing a frequency-dependent output scale. Appendix~\ref{app:theoretical_proof} isolates this effect in a scalar model and shows that the optimum is monotone in $N_c$ with an asymptotic $\log N_c$ leading term.

\paragraph{NPE estimate}
To consolidate the outputs of the  PEMs into a single classwise estimate, the \emph{NPE estimate} is defined as:
\begin{equation}
	\boldsymbol{\eta}(\mathbf{x})
	= \frac{(-1)^t}{N_{\mathrm{PEM}}} \sum_{k=1}^{N_{\mathrm{PEM}}} \mathbf{u}_k(\mathbf{x}).
\end{equation}
Here, $(-1)^t$ enforces a consistent sign convention independent of the training choice of $t$, and the normalization by $N_{\mathrm{PEM}}$ prevents the estimate from scaling directly with the number of PEMs.

This average is the operational NPE estimate. The theoretical analysis in Appendix~\ref{app:theoretical_proof} does not claim exact recovery of the empirical prior in a finite network. Instead, it shows that a simplified regularized PEM objective produces an optimum that is monotone in the class count and whose leading asymptotic term is logarithmic in that count.

\paragraph{Relation to the class prior.}
The empirical prior is $p_c=N_c/N$, so
\begin{equation}
    \log p_c = \log N_c - \log N.
\end{equation}
Thus, the leading $\log N_c$ term identified by the scalar analysis differs from $\log p_c$ only by a class-independent constant. The slower $\log\log N_c$ term derived in Appendix~\ref{app:theoretical_proof} introduces a class-dependent residual, so NPE should be interpreted as a learned log-prior surrogate rather than an exact estimator in the probabilistic sense.

\paragraph{Number of PEMs.}
A single PEM is sufficient to construct NPE. When $N_{\mathrm{PEM}}>1$, the estimate is averaged at inference, while the auxiliary losses are summed during training. The additional modules therefore change both the number of training-time parameters and the aggregate auxiliary gradient received by the backbone. The experiments report this behavior explicitly rather than treating the number of PEMs as part of the definition of NPE.

\paragraph{Choice of sign convention.}
The parameter $t$ determines the direction of the update applied to the true-class coordinate. 
The NPE estimate is re-signed so that its interpretation is consistent across the two sign conventions. 
Empirically, the subtractive configuration ($t=1$) performs favorably and is adopted in all subsequent experiments.

\subsection{NPE for Imbalance-Aware Prediction}
\label{sec:npe_la}

Since $\log N_c$ and $\log p_c$ differ only by a class-independent constant, the NPE estimate $\boldsymbol{\eta}(\mathbf{x})$ can be used as a learned surrogate for the log prior, up to a class-independent shift. Accordingly, NPE-LA applies $\boldsymbol{\eta}(\mathbf{x})$ as a feature-conditioned prior correction:
\begin{equation}
	\tilde{\mathbf{z}}(\mathbf{x})
	= \mathbf{z}(\mathbf{x}) - \boldsymbol{\eta}(\mathbf{x}).
\end{equation}

The procedure is structurally related to classical logit adjustment~\cite{menon2020long},
but differs in two key aspects.
First, the effective prior is \emph{learned jointly} during training rather than fixed in advance.
Second, the correction is \emph{feature-dependent}: the adjustment responds to the local behavior of the representation $\mathbf{h}(\mathbf{x})$.

\paragraph{Inference-time efficiency.}
For linear PEMs, let $g_k(\mathbf{h})=\mathbf{A}_k\mathbf{h}+\mathbf{a}_k$. The adjusted logits can be written as a single linear layer,
\begin{equation}
\tilde{\mathbf{z}}
=
\left(\mathbf{W}-\frac{(-1)^t}{N_{\mathrm{PEM}}}\sum_k\mathbf{A}_k\right)\mathbf{h}
+
\left(\mathbf{b}-\frac{(-1)^t}{N_{\mathrm{PEM}}}\sum_k\mathbf{a}_k\right).
\end{equation}
The PEMs can therefore be folded into the classifier after training, so the linear version adds no inference-time layer or forward-pass cost.

\section{Experiments}
\label{sec:experiments}
This section describes the experimental setup used to evaluate NPE-LA across different tasks. Both image-level classification on CIFAR datasets and pixel-level semantic segmentation on STARE and ADE20K are considered. The experiments are designed to assess performance under imbalanced class distributions.

All PEMs are initialized with a normal distribution having a small standard deviation of $0.001$, ensuring that the initial NPE estimate is close to zero. This choice prevents the one-way logistic loss from operating in a saturated regime at the start of training, preserving informative gradients and avoiding early optimization collapse.

\subsection{Classification Experiments}

NPE-LA is evaluated on long-tailed CIFAR-10 and CIFAR-100~\cite{krizhevsky2009learning} using a ResNet-32 backbone~\cite{he2016deep} trained from scratch with SGD (momentum $0.9$) and standard data augmentation. Classes are grouped by sample count: Head classes contain more than 100 samples, Medium classes contain between 20 and 100 samples, and Tail classes contain fewer than 20 samples.

Robustness to training conditions is assessed using two hyperparameter configurations summarized in \autoref{tab:hparams}. HP-1 follows the settings introduced in~\cite{cao2019learning}, which are widely used in imbalance-aware prediction studies, while HP-2 is a custom configuration designed to examine sensitivity to different optimization dynamics. NPE-LA is compared against CE, cRT, and LA under identical conditions. All reported results are mean~$\pm$~std across three independent runs, and final-epoch accuracy is used as the evaluation metric.

\begin{table}[!tbh]
	\centering
	\caption{Training hyperparameters for classification experiments. LR decay factor = 0.1 at each milestone.}
	\label{tab:hparams}
	\begin{tabular}{lccccc}
		\toprule
		Setting & LR & Weight Decay & Batch Size & Epochs & LR decay milestones \\ 
		\midrule
		HP-1 & 0.1 & $2\times10^{-4}$ & 124 & 200 & 160, 180 \\
		HP-2 & 0.05 & $10^{-3}$ & 64 & 120 & 100, 110 \\
		\bottomrule
	\end{tabular}
\end{table}

\subsection{Semantic Segmentation Experiments}

NPE-LA is evaluated on semantic segmentation using two imbalanced datasets: STARE~\cite{hoover2000locating} and ADE20K~\cite{zhou2017scene}. STARE contains 20 retinal images with two classes, where vessel pixels comprise only about 14\% of the image, creating a pronounced pixel-level imbalance. ADE20K includes 20{,}000 images and 150 categories with substantial frequency skew, with some classes appearing infrequently.

Pretrained backbones and decoder heads from MMSegmentation~\cite{mmseg2020} are used and kept frozen during PEM training. Freezing the feature extractor isolates the effect of the PEM and ensures that performance changes stem solely from logit recalibration rather than altered representations. For STARE, a UNet~\cite{ronneberger2015u} with an FCN head~\cite{long2015fully} is used, while ADE20K experiments employ either DeepLabv3~\cite{chen2017rethinking} or Swin-T~\cite{liu2021Swin} paired with a UPerNet decoder~\cite{xiao2018unified}. Optimizer hyperparameters, including learning rate, momentum, and weight decay, follow MMSegmentation defaults, and the total training iterations and learning-rate decay schedule follow the custom configuration in \autoref{tab:seg_schedule}.

\begin{table}[!tbh]
	\centering
	\caption{Per-model training schedule and optimizer settings. LR decay factor = 0.1 at each milestone.}
	\label{tab:seg_schedule}
	\renewcommand{\arraystretch}{1.15}
	\begin{tabular}{lcccccc}
		\toprule
		Dataset & Backbone + Head & Optimizer & LR & WD & Iterations & LR decay milestones \\
		\midrule
		STARE   & UNet-FCN        & SGD   & 1e-2 & 5e-4 & 600   & 200, 400            \\
		ADE20K  & DeepLab-V3      & SGD   & 1e-2 & 5e-4 & 2\,500 & 1\,250              \\
		ADE20K  & Swin-T + UPerNet & AdamW & 6e-5 & 1e-2 & 2\,500 & 1\,250              \\
		\bottomrule
	\end{tabular}
\end{table}

For dense prediction we consider two PEM variants. The first is a single-layer FCN, giving a linear feature-to-estimate mapping. The second follows the structure of the corresponding decoder head and therefore includes non-linear transformations. This comparison tests whether the correction depends strongly on PEM capacity.

\paragraph{PEM normalization and scaling.}
Batch Normalization is not applied to the final PEM output because its channel scale carries the frequency-dependent signal used for correction. In the dense-prediction experiments, the frozen decoder logits and the PEM outputs are produced by different objectives and can have substantially different numerical scales. We therefore use a scalar factor $\alpha$,
\[
\boldsymbol{\eta}_{\mathrm{scaled}}(\mathbf{x})=\alpha\,\boldsymbol{\eta}(\mathbf{x}),
\]
before applying the correction. The factor controls the magnitude of recalibration; it is an experimental hyperparameter in the dense-prediction setting and is reported in Tables~\ref{tab:stare_unet_results} and~\ref{tab:ade20k_results}.

\section{Results and Discussion} 
\label{sec:results} 
This section presents an empirical evaluation of the Neural Prior Estimator framework. The analysis begins by examining the regularization effect induced by PEMs on the backbone, then evaluates the full $\mathbf{NPE{+}LA}$ mechanism on long-tailed classification under varying optimization regimes, and finally considers its application to semantic segmentation.
 
Appendix~\ref{app:empirical_result} compares the learned NPE signal with the empirical class prior on CIFAR-100-LT at $\rho=100$. The estimate follows the overall frequency ordering, but the residual is not uniform across classes. In particular, classes with less compact representations tend to be overestimated. This observation is useful because it marks a limit of interpreting $\boldsymbol{\eta}(\mathbf{x})$ as an exact prior: the estimate also reflects representation geometry. For NPE-LA, this can change the correction applied to classes with dispersed features and may affect their margins~\cite{bartlett2017spectrally,cao2019learning}.

\subsection{Classification Results}

Before evaluating the full NPE-LA framework, we isolate the implicit impact of the NPE training objective on the shared feature backbone. In this ablation, the PEMs are active only during training; at inference time, the standard classifier is used without any logit adjustment.

Models were trained deterministically with varying numbers of PEMs ($N_{\mathrm{PEM}} \in \{0, 1, 4, 8, 16\}$). Figure~\ref{fig:ach_classwise} illustrates the resulting mean accuracy across Head, Medium, and Tail subgroups on CIFAR-100 ($\rho = 100$).

\begin{figure}[!tbh]
	\begin{center}
		\includegraphics[width=0.7\linewidth]{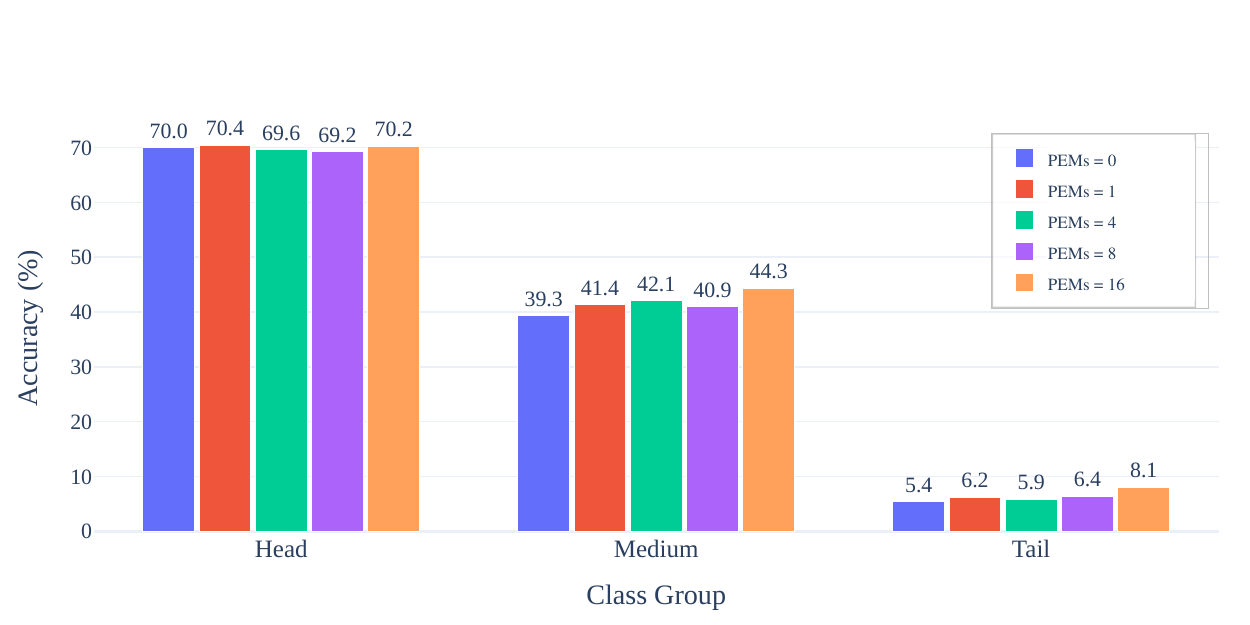}
		\caption{Class-wise accuracy on CIFAR-100 ($\rho = 100$) under HP-2, illustrating the impact of different numbers of PEMs ($N_{\mathrm{PEM}}$) across Head, Medium, and Tail classes. NPE is utilized during training only; no logit adjustment is performed at inference time.}
		\label{fig:ach_classwise}
	\end{center}
\end{figure}

The training-only PEM ablation changes the class-wise balance even when the NPE correction is not used at inference. Head accuracy remains close to the baseline, while medium and tail accuracy generally increases for the larger PEM configurations. The trend is not monotonic for every value of $N_{\mathrm{PEM}}$, which suggests that the effect is tied to the optimization dynamics rather than to a simple ensemble benefit.

\autoref{tab:pem_effect_hp2} reports the mean accuracy on CIFAR-100 and CIFAR-10 for the same experimental setting.  
The aggregate results in Table~\ref{tab:pem_effect_hp2} are mixed for small numbers of PEMs but show gains in several settings for the larger configurations. Because the PEM losses are summed, this ablation changes both model capacity and the strength of the auxiliary training signal; the table should therefore be read as an empirical study of the current implementation rather than as evidence that more PEMs are always better.

\begin{table}[!tbh]
	\centering
	\caption{Top-1 accuracy (\%) on CIFAR-100 and CIFAR-10 for different imbalance ratios $\rho$ and PEMs under hyperparameter setting HP-2. NPE is utilized during training only; no logit adjustment is performed at inference time.}
	\label{tab:pem_effect_hp2}
	\begin{tabular}{lccc|ccc}
		\toprule
		\multirow{2}{*}{\# of PEMs} & \multicolumn{3}{c|}{CIFAR-100} & \multicolumn{3}{c}{CIFAR-10} \\
		\cmidrule(lr){2-4} \cmidrule(lr){5-7}
		& 200 & 100 & 50 & 200 & 100 & 50 \\
		\midrule
		0 & 37.62 & 40.53 & 45.98 & 64.40  & \textbf{75.01}  & 77.00 \\
		1 & 37.28 & 41.00 & 45.96 & 65.24 & 73.05 & 78.29 \\
		4 & 37.09 & \textbf{42.35} & 45.45 & \textbf{66.31} & 73.95 & 77.99 \\
		8 & 38.39 & 41.65 & 46.29 & 66.13 & 73.72 & \textbf{78.98} \\
		16 & \textbf{38.42} & 41.78 & \textbf{46.36} & 66.22 & 74.20 & 78.33 \\
		\bottomrule
	\end{tabular}
	
\end{table}

Following the analysis of PEM-only training, the next experiments evaluate the full NPE-LA mechanism, where the learned prior offsets are applied during inference. Classification performance on CIFAR-100 and CIFAR-10 under hyperparameter settings HP-1 and HP-2 is reported in \autoref{tab:classification_results_hp1} and \autoref{tab:classification_results_hp2}, respectively.

Across the two training settings, NPE-LA remains close to LA while providing clear gains over CE and cRT in most cases. Under HP-1, the 16-PEM configuration has the highest mean accuracy in five of the six dataset/imbalance settings in Table~\ref{tab:classification_results_hp1}; LA is best on CIFAR-100 at $\rho=50$. Under HP-2, the 16-PEM configuration is best in all three CIFAR-100 settings and on CIFAR-10 at $\rho=50$, while LA is slightly higher on CIFAR-10 at $\rho=200$ and $\rho=100$.

The single-PEM variant is also informative because it represents the minimal NPE construction. Its results are generally close to LA, although it does not consistently exceed it. The larger gains with 16 PEMs should be interpreted together with the training-only ablation above: increasing $N_{\mathrm{PEM}}$ changes the aggregate auxiliary gradient as well as the number of added heads.

\begin{table}[!tbh]
	\centering
	\caption{Top-1 accuracy (\%) on CIFAR-100 and CIFAR-10 for different imbalance ratios $\rho$ and methods under hyperparameter setting HP-1. Bold indicates the highest value; underlined indicates the second highest}
	\label{tab:classification_results_hp1}
	\resizebox{\textwidth}{!}{
		\begin{tabular}{lccc|ccc}
			\toprule
			\multirow{2}{*}{Method} & \multicolumn{3}{c|}{CIFAR-100} & \multicolumn{3}{c}{CIFAR-10} \\
			\cmidrule(lr){2-4} \cmidrule(lr){5-7}
			& 200 & 100 & 50 & 200 & 100 & 50 \\
			\midrule
			CE & 34.83 $\pm$ 0.26 & 39.14 $\pm$ 0.41 & 43.87 $\pm$ 0.25 & 63.54 $\pm$ 0.28 & 71.24 $\pm$ 0.29 & 76.77 $\pm$ 0.37 \\
			cRT & 37.04 $\pm$ 0.24 & 41.04 $\pm$ 0.14 & 45.83 $\pm$ 0.32 & 67.19 $\pm$ 0.31 & 73.80 $\pm$ 0.42 & 79.07 $\pm$ 0.51 \\
			LA & 38.59 $\pm$ 0.18 & \underline{42.63} $\pm$ 0.46 & \textbf{47.07} $\pm$ 0.49 & 72.10 $\pm$ 0.27 & 77.37 $\pm$ 0.32 & 81.25 $\pm$ 0.38 \\
			NPE-LA (1 PEM) & \underline{38.85} $\pm$ 0.30 & 42.53 $\pm$ 0.40 & \underline{46.86} $\pm$ 0.78 & \underline{72.42} $\pm$ 0.21 & \underline{77.89} $\pm$ 0.38 & \underline{82.22} $\pm$ 0.46 \\
			NPE-LA (16 PEMs) & \textbf{39.36} $\pm$ 0.12 & \textbf{42.73} $\pm$ 0.28 & 46.44 $\pm$ 0.24 & \textbf{73.18} $\pm$ 0.41 & \textbf{78.17} $\pm$ 0.73 & \textbf{82.31} $\pm$ 0.22 \\
			\bottomrule
		\end{tabular}
	}
	
\end{table}

\begin{table}[!tbh]
	\centering
	\caption{Top-1 accuracy (\%) on CIFAR-100 and CIFAR-10 for different imbalance ratios $\rho$ and methods under hyperparameter setting HP-2. Bold indicates the highest value; underlined indicates the second highest}
	\label{tab:classification_results_hp2}
	\resizebox{\textwidth}{!}{
		\begin{tabular}{lccc|ccc}
			\toprule
			\multirow{2}{*}{Method} & \multicolumn{3}{c|}{CIFAR-100} & \multicolumn{3}{c}{CIFAR-10} \\
			\cmidrule(lr){2-4} \cmidrule(lr){5-7}
			& 200 & 100 & 50 & 200 & 100 & 50 \\
			\midrule
			CE & 37.04 $\pm$ 0.23 & 41.00 $\pm$ 0.12 & 45.46 $\pm$ 0.11 & 65.59 $\pm$ 0.20 & 73.18 $\pm$ 0.77 & 77.76 $\pm$ 0.44 \\
			cRT & 41.58 $\pm$ 0.24 & 45.43 $\pm$ 0.11 & 50.50 $\pm$ 0.12 & 72.34 $\pm$ 0.71 & 77.85 $\pm$ 0.25 & 81.01 $\pm$ 0.50 \\
			LA & \underline{43.35} $\pm$ 0.33 & \underline{46.85} $\pm$ 0.53 & \underline{51.30} $\pm$ 0.22 & \textbf{74.91} $\pm$ 0.51 & \textbf{80.13} $\pm$ 0.13 & \underline{82.95} $\pm$ 0.36 \\
			NPE-LA (1 PEM) & 42.99 $\pm$ 0.30 & 46.72 $\pm$ 0.23 & 51.25 $\pm$ 0.36 & 74.37 $\pm$ 0.25 & 79.35 $\pm$ 0.37 & 82.32 $\pm$ 0.19 \\
			NPE-LA (16 PEMs) & \textbf{43.96} $\pm$ 0.40 & \textbf{47.47} $\pm$ 0.36 &\textbf{ 51.82} $\pm$ 0.17 & \underline{74.83} $\pm$ 0.55 & \underline{80.12} $\pm$ 0.31 & \textbf{83.16} $\pm$ 0.08 \\
			\bottomrule
		\end{tabular}
	}
	
\end{table}

\autoref{fig:method_classwise} shows class-wise accuracy on CIFAR-100 for the high-imbalance setting ($\rho = 100$) under HP-2. CE prioritizes head classes, leaving tail categories largely unrecognized. cRT shifts performance toward minority classes, improving medium and tail accuracy at the expense of head performance. LA pushes this redistribution further, providing the strongest tail gains among the baselines while imposing the largest reduction on head classes. NPE-LA moves in the same direction but achieves a more favorable balance. NPE-LA with even a single PEM outperforms cRT on medium and tail classes, albeit with lower performance on the head classes. Employing sixteen PEMs brings tail-class accuracy close to that of LA while achieving a more balanced overall performance.

\begin{figure}[!tbh]
	\begin{center}
		\includegraphics[width=0.7\linewidth]{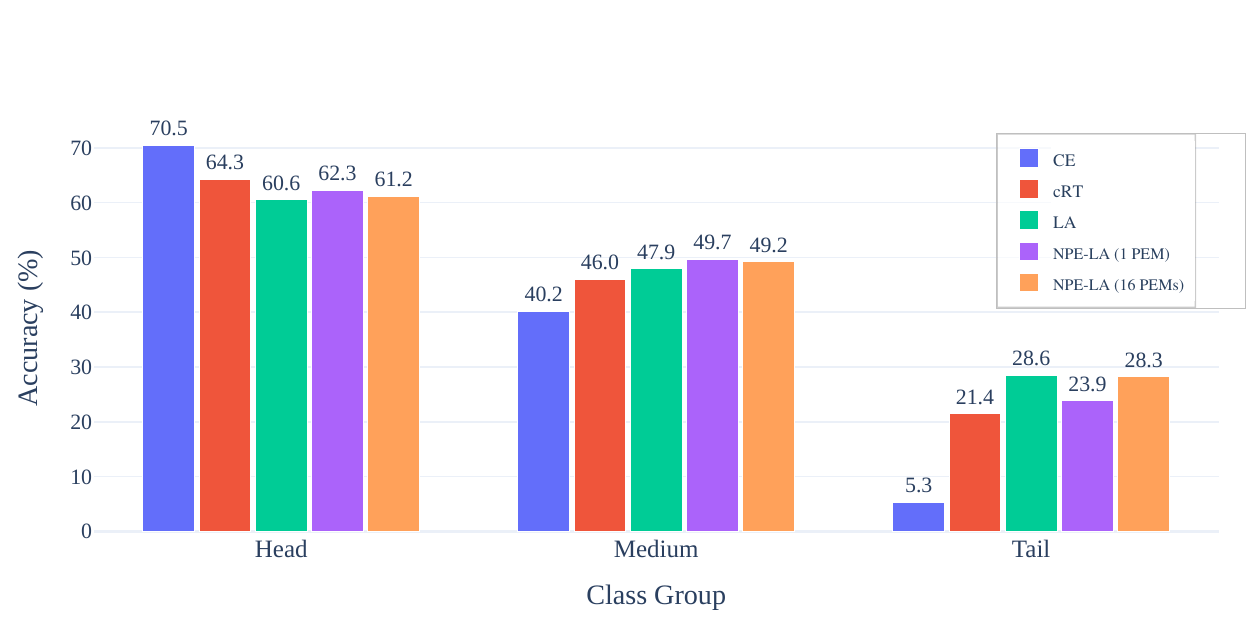}
		\caption{Class-wise accuracy on CIFAR-100 ($\rho=100$) under HP-2, comparing different methods across head, medium, and tail classes.}
		\label{fig:method_classwise}
	\end{center}
\end{figure}

\subsection{Segmentation Results}
In contrast to the classification experiments, the segmentation experiments are conducted with a frozen backbone. This constraint implies that the NPE does not influence the shared backbone features, leading to the evaluation of NPE-LA exclusively in a single PEM configuration.

The segmentation performance on the STARE dataset with a UNet backbone under different scale factors is summarized in \autoref{tab:stare_unet_results}. With $\alpha=1$, NPE-LA strongly shifts predictions toward the foreground: foreground accuracy rises, but mean Dice falls because of overprediction. Reducing the scale to $\alpha=0.2$ produces a better balance and improves both mean Dice and mean accuracy over the baseline. The result shows that the learned correction transfers to pixel-level prediction, while also making clear that its scale must be controlled in this setting.

\begin{table}[!tbh]
	\centering
	\caption{STARE segmentation with UNet--FCN: Dice and pixel accuracy for background (BG), foreground (FG), and mean, comparing the baseline and NPE-LA across scale factors $\alpha$.}
	\label{tab:stare_unet_results}
		\begin{tabular}{l l l c c c c c c c}
			\toprule
			Backbone & Decode Head & PEM & Scale & \multicolumn{3}{c}{Dice (\%)} & \multicolumn{3}{c}{Accuracy (\%)} \\
			\cmidrule(lr){5-7} \cmidrule(lr){8-10}
			&  & Type & Factor & BG & FG & Mean & BG & FG & Mean \\
			\midrule
			\multirow{3}{*}{UNet} & \multirow{3}{*}{FCN} & - & - & \textbf{98.55} & 81.20 & 89.78 & \textbf{99.13} & 75.21 & 87.17 \\
			\cmidrule(lr){3-10}
			& & \multirow{2}{*}{FCN} & 1 & 97.34 & 75.02 & 86.27 & 95.22 & \textbf{94.93} & \textbf{95.08} \\
			&  &  & 0.2 & 98.52 & \textbf{82.39} & \textbf{90.46} & 98.44 & 83.21 & 90.83 \\
			\bottomrule
		\end{tabular}%
	
\end{table}

ADE20K shows the same sensitivity to the correction scale (Table~\ref{tab:ade20k_results}). For DeepLab-V3, the unscaled correction increases mAcc but substantially reduces mIoU. With $\alpha=0.1$, mIoU returns to approximately the baseline level while mAcc remains higher. The FCN PEM gives a similar result despite using only a single linear projection. For Swin-T with UPerNet, $\alpha=1$ is unstable, whereas $\alpha=0.1$ restores performance; the FCN PEM at $\alpha=0.1$ slightly exceeds the baseline mIoU and improves mAcc. These experiments support the portability of the correction across decoder architectures, but the gains are modest and depend on choosing an appropriate scale.

\begin{table}[!tbh]
	\centering
	\caption{ADE20K segmentation with DeepLabv3 and Swin-T: mIoU and pixel accuracy for baseline and NPE-LA with different PEM types and scale factors $\alpha$.}
	\label{tab:ade20k_results}
		\begin{tabular}{l l l c c c}
			\toprule
			Backbone & Decode Head & PEM Type & Scale Factor & mIoU (\%) & mAcc (\%) \\
			\midrule
			\multirow{5}{*}{DeepLab-V3} & \multirow{5}{*}{ASPPHead} & - & - & 42.42 & 53.60 \\
			\cmidrule(lr){3-6}
			&  & \multirow{2}{*}{ASPPHead} & 1 & 24.55 & 65.59  \\
			&  &  & 0.1 & 42.82 & 57.99 \\
			\cmidrule(lr){3-6}
			&  & \multirow{2}{*}{FCN} & 1 & 29.28 & 67.05  \\
			&  &  & 0.1 & 42.92 & 56.92 \\	
			\midrule
			\midrule
			\multirow{5}{*}{Swin-T} & \multirow{5}{*}{UPerNet} & - & - & 44.41 & 55.96 \\
			\cmidrule(lr){3-6}
			& &\multirow{2}{*}{UPerNet} & 1 & 12.29 & 32.51  \\
			&  &  & 0.1 & 42.15 & 62.73 \\
			\cmidrule(lr){3-6}
			& & \multirow{2}{*}{FCN} & 1 & 33.87 & 66.51  \\
			&  &  & 0.1 & 44.73 & 58.00 \\
			\bottomrule
		\end{tabular}%
	
\end{table}

\autoref{fig:class_iou_ade20k} and \autoref{fig:class_acc_ade20k} show class-wise IoU and accuracy for baseline DeepLab-V3 and the NPE-LA variant with an FCN PEM and scale factor of 0.1. IoU decreases slightly for classes that already perform well but increases for weaker classes. Class-wise accuracy generally improves, reflecting NPE-LA's effect of redistributing confidence without destabilizing overall predictions.

\begin{figure}[!tp]
	\centering
	\includegraphics[width=0.88\linewidth]{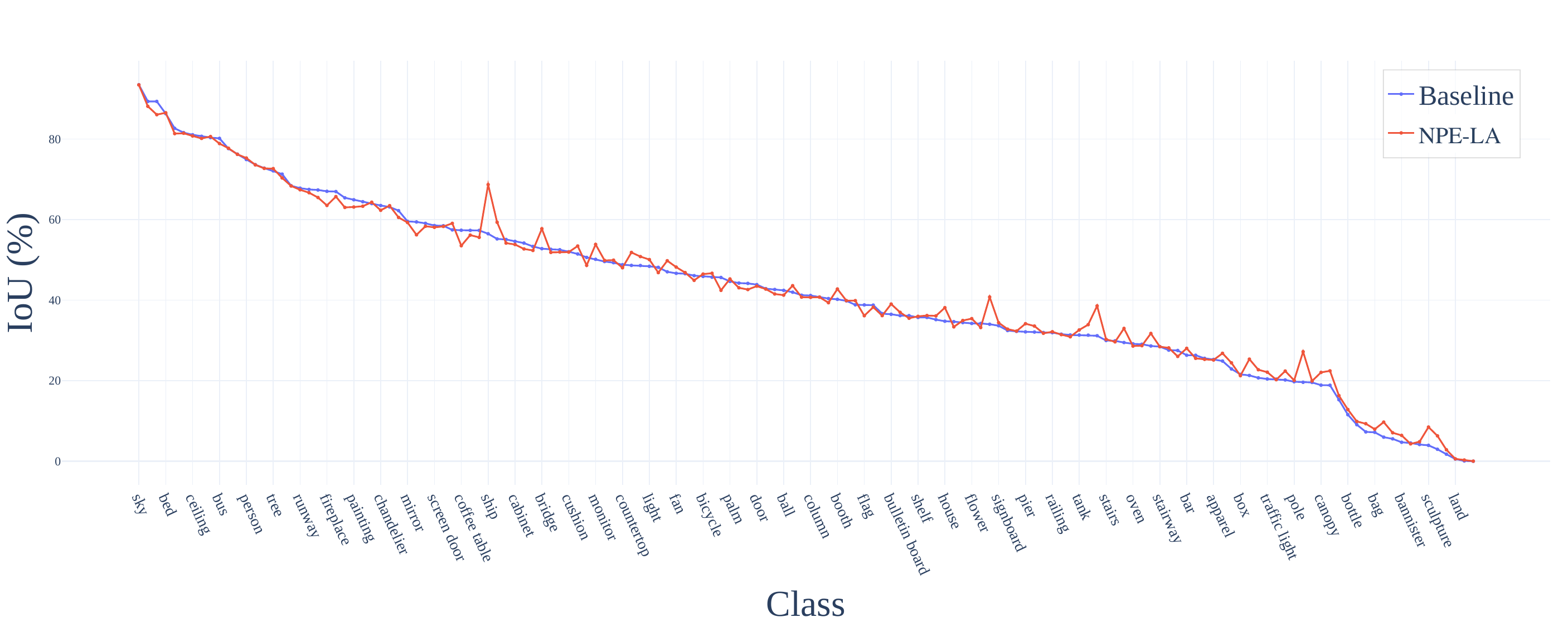}
		\caption{Class-wise IoU on ADE20K for baseline DeepLab-V3 and NPE-LA variant with a single FCN PEM and scale factor 0.1. Baseline IoUs are ordered in ascending order for improved visual clarity.}
		\label{fig:class_iou_ade20k}
\end{figure}

\begin{figure}[!tp]
	\centering
	\includegraphics[width=0.88\linewidth]{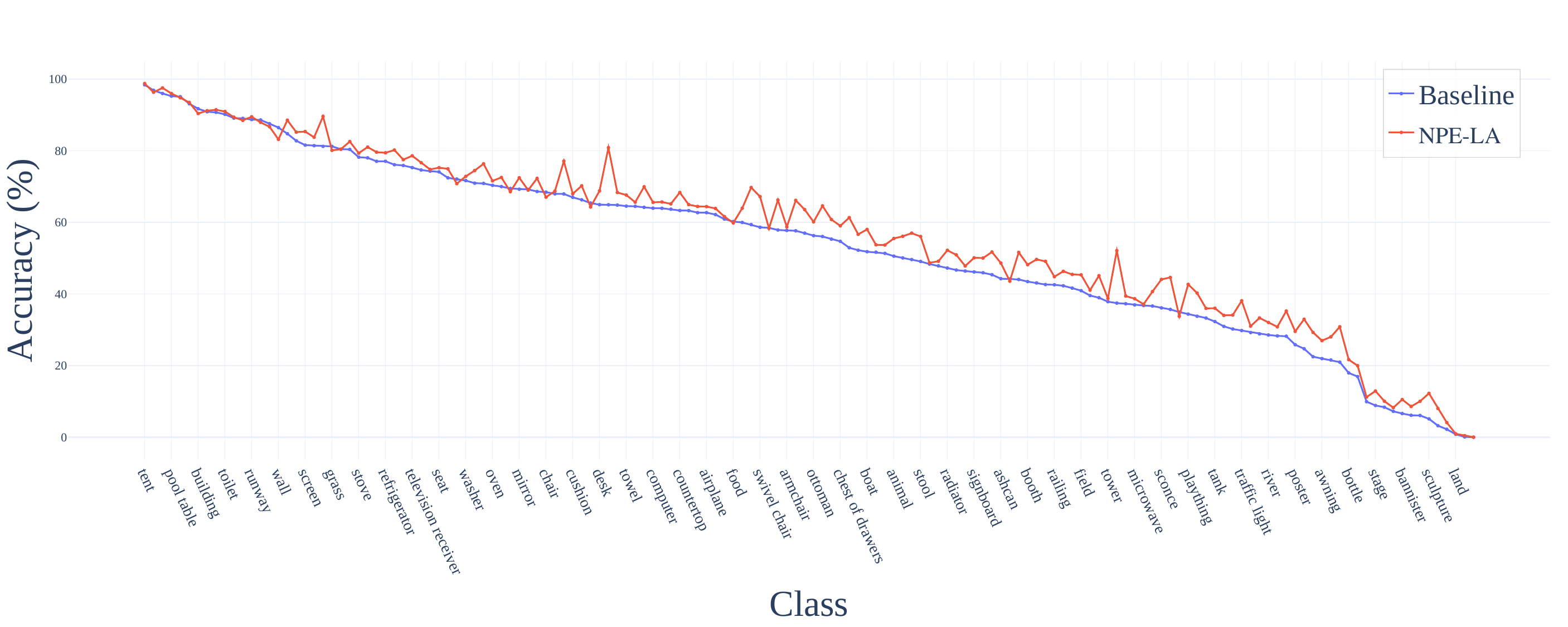}
		\caption{Class-wise accuracy on ADE20K for baseline DeepLab-V3 and NPE-LA variant using a single FCN PEM and scale factor 0.1. Baseline accuracies are ordered in ascending order for improved visual clarity.}
		\label{fig:class_acc_ade20k}
\end{figure}

\FloatBarrier

Overall, the STARE and ADE20K experiments suggest that NPE-LA can adjust predictions for underrepresented classes while maintaining stable performance for the main decoder, even with simple PEM configurations and frozen backbones.

\FloatBarrier
\section{Conclusion}
\label{sec:conclusion}

We introduced Neural Prior Estimation as a way to learn the class-frequency signal used by logit adjustment from latent representations rather than supplying empirical class counts directly to the correction rule. A PEM is trained with a one-way logistic objective, and its output is used as a class-wise logit offset. In the simplified analysis, the regularized scalar optimum is monotone in class frequency and has $\log N_c$ as its leading asymptotic term. The empirical analysis shows the same overall frequency ordering, while also revealing deviations associated with representation geometry.

On long-tailed CIFAR, NPE-LA is competitive with standard logit adjustment and improves over CE and cRT in the reported settings. A single PEM is sufficient for the method, while additional PEMs can alter the backbone optimization and sometimes improve the final result. The dense-prediction experiments on STARE and ADE20K show that the learned correction can also be applied to pixel-level outputs, although a scale factor is needed to match the correction to the frozen decoder logits.

The current results do not settle how NPE behaves at ImageNet-LT or iNaturalist scale, nor do they separate all effects introduced by multiple PEMs. Those experiments are the most important next step. More broadly, the observed interaction between the estimated frequency signal and feature geometry suggests that prior estimation from learned representations is not simply a replacement for counting; it may also provide a way to study how imbalance is encoded by the representation itself.

\bibliographystyle{unsrt}
\bibliography{references.bib}  

@article{zhang2023deep,
  title={Deep long-tailed learning: A survey},
  author={Zhang, Yifan and Kang, Bingyi and Hooi, Bryan and Yan, Shuicheng and Feng, Jiashi},
  journal={IEEE transactions on pattern analysis and machine intelligence},
  volume={45},
  number={9},
  pages={10795--10816},
  year={2023},
  publisher={IEEE}
}

@article{wang2020long,
  title={Long-tailed recognition by routing diverse distribution-aware experts},
  author={Wang, Xudong and Lian, Long and Miao, Zhongqi and Liu, Ziwei and Yu, Stella X},
  journal={arXiv preprint arXiv:2010.01809},
  year={2020}
}

@inproceedings{kim2020m2m,
  title={M2m: Imbalanced classification via major-to-minor translation},
  author={Kim, Jaehyung and Jeong, Jongheon and Shin, Jinwoo},
  booktitle={Proceedings of the IEEE/CVF conference on computer vision and pattern recognition},
  pages={13896--13905},
  year={2020}
}

@inproceedings{wang2021rsg,
  title={Rsg: A simple but effective module for learning imbalanced datasets},
  author={Wang, Jianfeng and Lukasiewicz, Thomas and Hu, Xiaolin and Cai, Jianfei and Xu, Zhenghua},
  booktitle={Proceedings of the IEEE/CVF conference on computer vision and pattern recognition},
  pages={3784--3793},
  year={2021}
}

@article{zhang2025systematic,
  title={A systematic review on long-tailed learning},
  author={Zhang, Chongsheng and Almpanidis, George and Fan, Gaojuan and Deng, Binquan and Zhang, Yanbo and Liu, Ji and Kamel, Aouaidjia and Soda, Paolo and Gama, Jo{\~a}o},
  journal={IEEE Transactions on Neural Networks and Learning Systems},
  year={2025},
  publisher={IEEE}
}

@article{yin2018feature,
  title={Feature transfer learning for deep face recognition with long-tail data},
  author={Yin, Xi and Yu, Xiang and Sohn, Kihyuk and Liu, Xiaoming and Chandraker, Manmohan},
  journal={arXiv preprint arXiv:1803.09014},
  volume={1},
  number={2},
  year={2018},
  publisher={CoRR}
}

@inproceedings{li2021metasaug,
  title={Metasaug: Meta semantic augmentation for long-tailed visual recognition},
  author={Li, Shuang and Gong, Kaixiong and Liu, Chi Harold and Wang, Yulin and Qiao, Feng and Cheng, Xinjing},
  booktitle={Proceedings of the IEEE/CVF conference on computer vision and pattern recognition},
  pages={5212--5221},
  year={2021}
}

@inproceedings{wang2019dynamic,
  title={Dynamic curriculum learning for imbalanced data classification},
  author={Wang, Yiru and Gan, Weihao and Yang, Jie and Wu, Wei and Yan, Junjie},
  booktitle={Proceedings of the IEEE/CVF international conference on computer vision},
  pages={5017--5026},
  year={2019}
}

@article{chawla2002smote,
  title={SMOTE: synthetic minority over-sampling technique},
  author={Chawla, Nitesh V and Bowyer, Kevin W and Hall, Lawrence O and Kegelmeyer, W Philip},
  journal={Journal of artificial intelligence research},
  volume={16},
  pages={321--357},
  year={2002}
}

@inproceedings{zang2021fasa,
  title={Fasa: Feature augmentation and sampling adaptation for long-tailed instance segmentation},
  author={Zang, Yuhang and Huang, Chen and Loy, Chen Change},
  booktitle={Proceedings of the IEEE/CVF international conference on computer vision},
  pages={3457--3466},
  year={2021}
}

@inproceedings{cui2019class,
  title={Class-balanced loss based on effective number of samples},
  author={Cui, Yin and Jia, Menglin and Lin, Tsung-Yi and Song, Yang and Belongie, Serge},
  booktitle={Proceedings of the IEEE/CVF conference on computer vision and pattern recognition},
  pages={9268--9277},
  year={2019}
}

@inproceedings{lin2017focal,
  title={Focal loss for dense object detection},
  author={Lin, Tsung-Yi and Goyal, Priya and Girshick, Ross and He, Kaiming and Doll{\'a}r, Piotr},
  booktitle={Proceedings of the IEEE international conference on computer vision},
  pages={2980--2988},
  year={2017}
}

@article{ren2020balanced,
  title={Balanced meta-softmax for long-tailed visual recognition},
  author={Ren, Jiawei and Yu, Cunjun and Ma, Xiao and Zhao, Haiyu and Yi, Shuai and others},
  journal={Advances in neural information processing systems},
  volume={33},
  pages={4175--4186},
  year={2020}
}

@inproceedings{tan2020equalization,
  title={Equalization loss for long-tailed object recognition},
  author={Tan, Jingru and Wang, Changbao and Li, Buyu and Li, Quanquan and Ouyang, Wanli and Yin, Changqing and Yan, Junjie},
  booktitle={Proceedings of the IEEE/CVF conference on computer vision and pattern recognition},
  pages={11662--11671},
  year={2020}
}

@inproceedings{hsieh2021droploss,
  title={Droploss for long-tail instance segmentation},
  author={Hsieh, Ting-I and Robb, Esther and Chen, Hwann-Tzong and Huang, Jia-Bin},
  booktitle={Proceedings of the AAAI conference on artificial intelligence},
  volume={35},
  number={2},
  pages={1549--1557},
  year={2021}
}

@inproceedings{wang2021adaptive,
  title={Adaptive class suppression loss for long-tail object detection},
  author={Wang, Tong and Zhu, Yousong and Zhao, Chaoyang and Zeng, Wei and Wang, Jinqiao and Tang, Ming},
  booktitle={Proceedings of the IEEE/CVF conference on computer vision and pattern recognition},
  pages={3103--3112},
  year={2021}
}

@article{cao2019learning,
  title={Learning imbalanced datasets with label-distribution-aware margin loss},
  author={Cao, Kaidi and Wei, Colin and Gaidon, Adrien and Arechiga, Nikos and Ma, Tengyu},
  journal={Advances in neural information processing systems},
  volume={32},
  year={2019}
}

@article{menon2020long,
  title={Long-tail learning via logit adjustment},
  author={Menon, Aditya Krishna and Jayasumana, Sadeep and Rawat, Ankit Singh and Jain, Himanshu and Veit, Andreas and Kumar, Sanjiv},
  journal={arXiv preprint arXiv:2007.07314},
  year={2020}
}

@article{tian2020posterior,
  title={Posterior re-calibration for imbalanced datasets},
  author={Tian, Junjiao and Liu, Yen-Cheng and Glaser, Nathaniel and Hsu, Yen-Chang and Kira, Zsolt},
  journal={Advances in neural information processing systems},
  volume={33},
  pages={8101--8113},
  year={2020}
}

@inproceedings{zhang2021distribution,
  title={Distribution alignment: A unified framework for long-tail visual recognition},
  author={Zhang, Songyang and Li, Zeming and Yan, Shipeng and He, Xuming and Sun, Jian},
  booktitle={Proceedings of the IEEE/CVF conference on computer vision and pattern recognition},
  pages={2361--2370},
  year={2021}
}

@misc{krizhevsky2009learning,
  title={Learning multiple layers of features from tiny images},
  author={Krizhevsky, Alex and Hinton, Geoffrey and others},
  year={2009},
  publisher={Toronto, ON, Canada}
}

@inproceedings{he2016deep,
  title={Deep residual learning for image recognition},
  author={He, Kaiming and Zhang, Xiangyu and Ren, Shaoqing and Sun, Jian},
  booktitle={Proceedings of the IEEE conference on computer vision and pattern recognition},
  pages={770--778},
  year={2016}
}

@misc{mmseg2020,
    title={{MMSegmentation}: OpenMMLab Semantic Segmentation Toolbox and Benchmark},
    author={MMSegmentation Contributors},
    howpublished = {\url{https://github.com/open-mmlab/mmsegmentation}},
    year={2020}
}

@article{kang2019decoupling,
  title={Decoupling representation and classifier for long-tailed recognition},
  author={Kang, Bingyi and Xie, Saining and Rohrbach, Marcus and Yan, Zhicheng and Gordo, Albert and Feng, Jiashi and Kalantidis, Yannis},
  journal={arXiv preprint arXiv:1910.09217},
  year={2019}
}

@article{ye2020identifying,
  title={Identifying and compensating for feature deviation in imbalanced deep learning},
  author={Ye, Han-Jia and Chen, Hong-You and Zhan, De-Chuan and Chao, Wei-Lun},
  journal={arXiv preprint arXiv:2001.01385},
  year={2020}
}

@inproceedings{wu2020solving,
  title={Solving long-tailed recognition with deep realistic taxonomic classifier},
  author={Wu, Tz-Ying and Morgado, Pedro and Wang, Pei and Ho, Chih-Hui and Vasconcelos, Nuno},
  booktitle={European Conference on Computer Vision},
  pages={171--189},
  year={2020},
  organization={Springer}
}

@article{tang2020long,
  title={Long-tailed classification by keeping the good and removing the bad momentum causal effect},
  author={Tang, Kaihua and Huang, Jianqiang and Zhang, Hanwang},
  journal={Advances in neural information processing systems},
  volume={33},
  pages={1513--1524},
  year={2020}
}

@inproceedings{zhou2020bbn,
  title={Bbn: Bilateral-branch network with cumulative learning for long-tailed visual recognition},
  author={Zhou, Boyan and Cui, Quan and Wei, Xiu-Shen and Chen, Zhao-Min},
  booktitle={Proceedings of the IEEE/CVF conference on computer vision and pattern recognition},
  pages={9719--9728},
  year={2020}
}

@inproceedings{li2020overcoming,
  title={Overcoming classifier imbalance for long-tail object detection with balanced group softmax},
  author={Li, Yu and Wang, Tao and Kang, Bingyi and Tang, Sheng and Wang, Chunfeng and Li, Jintao and Feng, Jiashi},
  booktitle={Proceedings of the IEEE/CVF conference on computer vision and pattern recognition},
  pages={10991--11000},
  year={2020}
}

@article{zhang2022self,
  title={Self-supervised aggregation of diverse experts for test-agnostic long-tailed recognition},
  author={Zhang, Yifan and Hooi, Bryan and Hong, Lanqing and Feng, Jiashi},
  journal={Advances in neural information processing systems},
  volume={35},
  pages={34077--34090},
  year={2022}
}

@inproceedings{zhou2017scene,
  title={Scene parsing through ade20k dataset},
  author={Zhou, Bolei and Zhao, Hang and Puig, Xavier and Fidler, Sanja and Barriuso, Adela and Torralba, Antonio},
  booktitle={Proceedings of the IEEE conference on computer vision and pattern recognition},
  pages={633--641},
  year={2017}
}

@article{hoover2000locating,
  title={Locating blood vessels in retinal images by piecewise threshold probing of a matched filter response},
  author={Hoover, AD and Kouznetsova, Valentina and Goldbaum, Michael},
  journal={IEEE Transactions on Medical imaging},
  volume={19},
  number={3},
  pages={203--210},
  year={2000},
  publisher={IEEE}
}

@inproceedings{ronneberger2015u,
  title={U-net: Convolutional networks for biomedical image segmentation},
  author={Ronneberger, Olaf and Fischer, Philipp and Brox, Thomas},
  booktitle={International Conference on Medical image computing and computer-assisted intervention},
  pages={234--241},
  year={2015},
  organization={Springer}
}

@inproceedings{long2015fully,
  title={Fully convolutional networks for semantic segmentation},
  author={Long, Jonathan and Shelhamer, Evan and Darrell, Trevor},
  booktitle={Proceedings of the IEEE conference on computer vision and pattern recognition},
  pages={3431--3440},
  year={2015}
}

@article{chen2017rethinking,
  title={Rethinking atrous convolution for semantic image segmentation},
  author={Chen, Liang-Chieh and Papandreou, George and Schroff, Florian and Adam, Hartwig},
  journal={arXiv preprint arXiv:1706.05587},
  year={2017}
}

@article{liu2021Swin,
  title={Swin Transformer: Hierarchical Vision Transformer using Shifted Windows},
  author={Liu, Ze and Lin, Yutong and Cao, Yue and Hu, Han and Wei, Yixuan and Zhang, Zheng and Lin, Stephen and Guo, Baining},
  journal={arXiv preprint arXiv:2103.14030},
  year={2021}
}

@inproceedings{xiao2018unified,
  title={Unified perceptual parsing for scene understanding},
  author={Xiao, Tete and Liu, Yingcheng and Zhou, Bolei and Jiang, Yuning and Sun, Jian},
  booktitle={Proceedings of the European conference on computer vision (ECCV)},
  pages={418--434},
  year={2018}
}

@article{bartlett2017spectrally,
  title={Spectrally-normalized margin bounds for neural networks},
  author={Bartlett, Peter L and Foster, Dylan J and Telgarsky, Matus J},
  journal={Advances in neural information processing systems},
  volume={30},
  year={2017}
}

@article{papyan2020prevalence,
  title={Prevalence of neural collapse during the terminal phase of deep learning training},
  author={Papyan, Vardan and Han, X. Y. and Donoho, David L.},
  journal={Proceedings of the National Academy of Sciences},
  volume={117},
  number={40},
  pages={24652--24663},
  year={2020},
  doi={10.1073/pnas.2015509117}
}

\newpage
\appendix

\section{Theoretical Analysis of Prior Estimation}
\label{app:theoretical_proof}

This appendix isolates the class-frequency effect of the one-way PEM objective in a simplified scalar model. The analysis is not intended as an exact description of a finite neural network. In particular, the quadratic term below regularizes the scalar output directly and should be viewed as a tractable surrogate for the effect of parameter regularization and finite optimization.

We consider a single PEM and the re-signed NPE output, so the relevant class coordinate is positive at the optimum. We further use a class-collapse approximation in which samples from class $c$ share a common scalar output $\eta_c$. This is motivated by the late-training within-class concentration associated with neural collapse~\cite{papyan2020prevalence}.

\subsection{Scalar objective}

Let $N_c$ be the number of training samples in class $c$. For a fixed class, consider
\begin{equation}
J_c(\eta)=-N_c\log\sigma(\eta)+\frac{\lambda}{2}\eta^2,
\qquad \lambda>0.
\end{equation}
The first term is the accumulated one-way logistic loss for that class and the second is the scalar regularizer. Its derivatives are
\begin{equation}
J_c'(\eta)=-N_c\sigma(-\eta)+\lambda\eta,
\end{equation}
and
\begin{equation}
J_c''(\eta)=N_c\sigma(\eta)\sigma(-\eta)+\lambda>0.
\end{equation}
Hence $J_c$ is strictly convex and has a unique minimizer.

\subsection{Dependence on class frequency}

\begin{proposition}[Frequency dependence and asymptotic behavior]
\label{prop:asymptotic}
Let $\eta_c^\ast$ be the minimizer of $J_c$. It is the unique positive solution of
\begin{equation}
\eta_c^\ast\bigl(1+e^{\eta_c^\ast}\bigr)=\frac{N_c}{\lambda}.
\label{eq:stationary_scalar}
\end{equation}
Consequently, $\eta_c^\ast$ is strictly increasing in $N_c$. Moreover, as $N_c/\lambda\to\infty$,
\begin{equation}
\eta_c^\ast
=
W\!\left(\frac{N_c}{\lambda}\right)+o(1)
=
\log\!\left(\frac{N_c}{\lambda}\right)
-
\log\log\!\left(\frac{N_c}{\lambda}\right)
+o(1),
\end{equation}
where $W$ is the principal Lambert $W$ function.
\end{proposition}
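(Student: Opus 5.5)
The plan is to read everything off the first-order condition, using the strict convexity of $J_c$ established just before the proposition. Since $J_c$ is strictly convex and $J_c'(\eta)\to\pm\infty$ as $\eta\to\pm\infty$, its unique minimizer is the unique zero of $J_c'$. Setting $J_c'(\eta)=0$ gives $\lambda\eta=N_c\sigma(-\eta)=N_c/(1+e^{\eta})$, which rearranges to \eqref{eq:stationary_scalar}.

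Positivity and uniqueness follow from the map $\phi(\eta)=\eta(1+e^{\eta})$. Its sign is that of $\eta$, and the right-hand side $N_c/\lambda$ is positive, so any root is positive. On $(0,\infty)$ we have $\phi'(\eta)=1+e^{\eta}+\eta e^{\eta}>0$, with $\phi(0)=0$ and $\phi(\eta)\to\infty$. Hence $\phi$ is a strictly increasing bijection of $(0,\infty)$ onto itself. Therefore $\eta_c^\ast=\phi^{-1}(N_c/\lambda)$ exists, is unique, and is strictly increasing in $N_c$, because $\phi^{-1}$ is strictly increasing. This gives both the characterization and the monotonicity.

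For the asymptotics, write $x=N_c/\lambda$. Rewrite \eqref{eq:stationary_scalar} as $\eta e^{\eta}=x-\eta$, so by definition of the principal branch $\eta_c^\ast=W(x-\eta_c^\ast)$, which is valid since $x-\eta_c^\ast>0$. A crude a priori bound comes next: from $\eta e^{\eta}\le x$ we get $\eta_c^\ast\le W(x)\le\log x$ for large $x$. Hence $\eta_c^\ast\le\log x=o(x)$. Then use the mean value theorem with $W'(y)=W(y)/\bigl(y(1+W(y))\bigr)\le 1/y$ for $y>0$. This yields
\[
0\le W(x)-\eta_c^\ast=W(x)-W(x-\eta_c^\ast)\le\frac{\eta_c^\ast}{x-\eta_c^\ast}\le\frac{\log x}{x-\log x}\to0,
\]
which is the first equality $\eta_c^\ast=W(x)+o(1)$.

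The second equality is the classical expansion $W(x)=\log x-\log\log x+O\!\left(\log\log x/\log x\right)$. I would either cite it or derive it quickly. Set $W=\log x-\log W$ from $We^{W}=x$. The bounds $W\le\log x$ and $W\ge\log x-\log\log x$ (valid for large $x$) give $\log W=\log\log x+o(1)$, and hence $W=\log x-\log\log x+o(1)$. The main delicate step is the perturbation argument controlling $W(x)-W(x-\eta_c^\ast)$. It needs the a priori bound $\eta_c^\ast=O(\log x)$ before the $1/y$ derivative bound gives a vanishing error; everything else is routine monotonicity.
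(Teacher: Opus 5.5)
Your proof is correct and follows essentially the same route as the paper: the first-order condition, strict monotonicity of $\phi(\eta)=\eta(1+e^{\eta})$ for uniqueness and monotonicity in $N_c$, the rewriting $\eta e^{\eta}=x-\eta$, and the standard expansion of the Lambert $W$ function. Your a priori bound $\eta_c^\ast\le\log x$ combined with the mean-value estimate $0\le W(x)-\eta_c^\ast\le \eta_c^\ast/(x-\eta_c^\ast)$ makes rigorous the step the paper states only informally ($\eta e^{\eta}=r(1-o(1))$, hence $\eta=W(r)+o(1)$), and your sign argument also explicitly rules out non-positive roots, which the paper leaves implicit.
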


\begin{proof}
Setting $J_c'(\eta)=0$ gives
\[
\lambda\eta=\frac{N_c}{1+e^\eta},
\]
which is equivalent to Eq.~\ref{eq:stationary_scalar}. The function $\phi(\eta)=\eta(1+e^\eta)$ is strictly increasing for $\eta\ge0$, so the positive solution is unique and increases with $N_c/\lambda$.

Let $r=N_c/\lambda$. Equation~\ref{eq:stationary_scalar} gives $\eta e^\eta=r-\eta$. As $r\to\infty$, the solution also tends to infinity and $\eta/r\to0$, so $\eta e^\eta=r(1-o(1))$. Therefore $\eta=W(r)+o(1)$. The final expression follows from the standard asymptotic expansion $W(r)=\log r-\log\log r+o(1)$.
\end{proof}

\subsection{Relation to the empirical prior}

Because $N_c=Np(c)$,
\begin{equation}
\eta_c^\ast
=
\log p(c)+\log\!\left(\frac{N}{\lambda}\right)
-
\log\log\!\left(\frac{N_c}{\lambda}\right)
+o(1).
\end{equation}
The first class-dependent term is the desired log prior. The second term is common to all classes and therefore does not affect a softmax decision. The remaining $\log\log$ term varies with class frequency, so the scalar model predicts a slowly varying deviation from an exact log-prior.

For two classes $c$ and $d$,
\begin{equation}
\eta_c^\ast-\eta_d^\ast
=
\log\frac{N_c}{N_d}
-
\log\frac{\log(N_c/\lambda)}{\log(N_d/\lambda)}
+o(1).
\end{equation}
Thus the leading difference is the log prior ratio, while the residual changes more slowly with the counts. This is the sense in which the one-way objective provides a prior-related signal. The empirical analysis below tests how closely the learned network follows this idealized behavior.

\subsection{Empirical Results}
\label{app:empirical_result}

The relationship between the NPE estimated prior 
$\hat{p}^{\mathrm{val}}(y)$, the empirical class prior $p(y)$, and the representation 
confidence $\mathrm{Conf}(y)$ is evaluated on CIFAR-100-LT with imbalance ratio 
$\rho = 100$. Here, $\hat{p}^{\mathrm{val}}(y)$ denotes the class-wise normalized NPE signal reported on the validation set, while $\mathrm{Conf}(y)$ denotes the class-wise representation-confidence statistic used for this diagnostic analysis. These quantities are used only for the analysis in this appendix and do not enter NPE training or inference.  
Results are shown in Figure~\ref{fig:prob}.

\begin{figure}[htb]
	\begin{center}
		\includegraphics[width=0.7\linewidth]{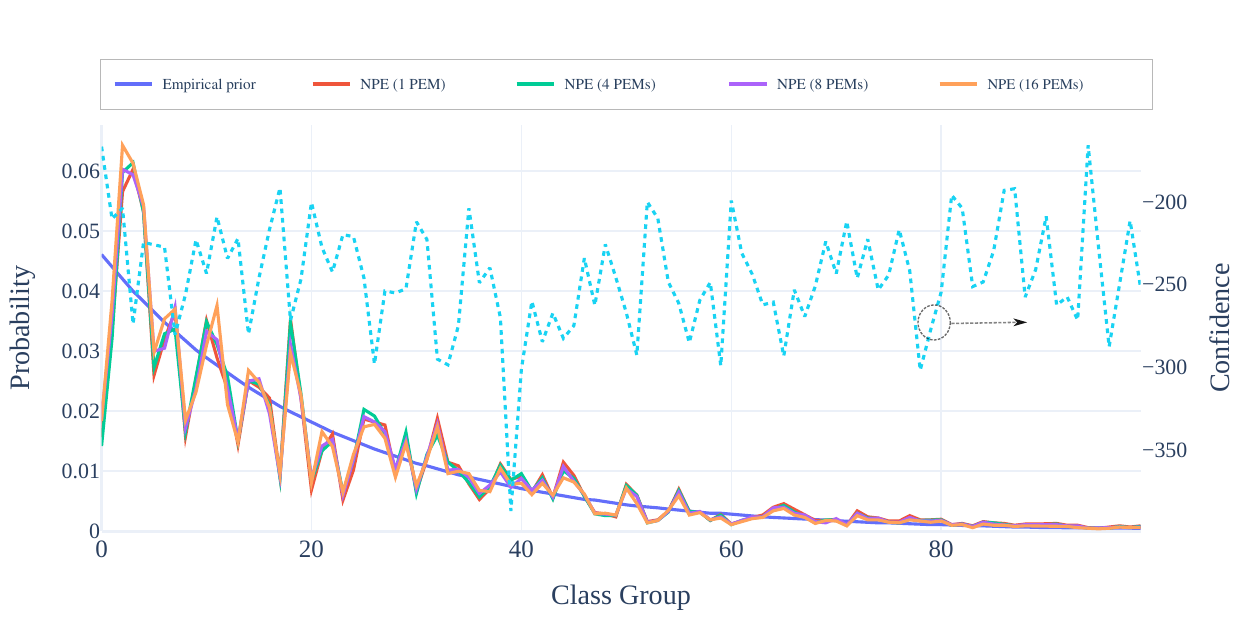}
\caption{
	NPE estimated prior $\hat{p}^{\mathrm{val}}(y)$ and empirical class prior $p(y)$ (left axis), and representation confidence $\mathrm{Conf}(y)$ (right axis), on CIFAR-100-LT ($\rho = 100$).
}
		\label{fig:prob}
	\end{center}
\end{figure}

Three consistent trends emerge:

\begin{enumerate}
	\item \textbf{Global Prior Fidelity.}  
	The NPE estimate follows the monotonic decay of the empirical prior 
	$p(y)$, showing that the estimate $\eta_y$ tracks the main class-frequency trend.  
	This empirical behavior aligns with the theoretical analysis in Appendix~\ref{app:theoretical_proof}.
	
	\item \textbf{Estimator Stability.}  
	The values of $\hat{p}^{\mathrm{val}}(y)$ are stable across different ensemble 
	sizes 
	$N_{\mathrm{PEM}} \in \{1,4,8,16\}$, indicating that the overall frequency trend is not strongly dependent on the number of PEMs in this experiment.
	
	\item \textbf{Geometry--Prior Interaction.}  
	A clear relationship appears between representation confidence and the 
	estimation residual
	\begin{equation}
		E(y) = \hat{p}^{\mathrm{val}}(y) - p(y).
	\end{equation}
	Classes with low confidence (high feature dispersion) tend to have 
	$E(y) > 0$, meaning the NPE \emph{overestimates} their prior.  
	Conversely, compact classes (high confidence) tend to be slightly 
	underestimated.  
	This arises because dispersed classes require larger PEM output 
	magnitudes to satisfy the one-way logistic objective, which the NPE 
	implicitly interprets as higher frequency.
\end{enumerate}

\paragraph{Relation to the analysis.}
The scalar model predicts a monotone frequency signal with a logarithmic leading term, not exact prior recovery. The network follows this trend but also shows class-dependent residuals. The correlation with representation confidence suggests that finite feature geometry contributes to those residuals in addition to the count-dependent effect captured by the scalar analysis.

\end{document}